\documentclass[5p,times]{elsarticle}
\usepackage{amsmath,amssymb,amsfonts,amsthm}
\usepackage{graphicx}
\usepackage{booktabs}

\usepackage{microtype}
\newtheorem{theorem}{Theorem}

\journal{Neural Networks}

\newcommand{\mask}[1]{\texttt{#1}}

\begin{document}

\begin{frontmatter}

\title{From Latent Space to Training Data: Explainable Specialization in Minimal MLPs\tnoteref{t1}}
\tnotetext[t1]{Code, data, figures, and tables for full reproduction will be released on manuscript publication.}

\author[itis]{Enrique Alba}
\ead{ealbat@uma.es}

\author[itis]{Ezequiel L\'opez-Rubio}
\ead{ezeqlr@lcc.uma.es}

\affiliation[itis]{organization={ITIS Software, University of Malaga}, country={Spain}}

\begin{abstract}
We here study whether training biases can make hidden neurons specialize in minimal one-hidden-layer MLPs, and whether such specialization improves prototype-based reconstruction of the training dataset from the learned weights. We consider Gaussian-activation MLPs of width equal to dataset size and compare three structural losses that respectively encourage coverage of the training samples, separation between neuron-induced prototypes, and low overlap of hidden responses, against the standard fitting baseline. Experiments on uniformly sampled one-dimensional datasets show a stable pattern from $N=3$ to $N=100$ across $480$ controlled runs. Coverage regularization gives the lowest mean reconstruction error at every tested size and raises the prototype-usage specialization ratio relative to the standard baseline, while separation has mixed effects and overlap penalties are systematically harmful. We show that the harm is not an optimization failure: overlap-active approaches fit the data as well as overlap-free ones but route the optimizer to a degenerate equilibrium in which prototype centers are pushed outside the convex hull of the training inputs. Coverage cannot reward this expulsion and acts as an attractor: separation admits it only at large temperature and overlap admits it at the nominal hyperparameter choice. A direct $\tau$-sweep on the separation-only mask and a prototype-position visualization at $N=100$ confirm the mechanism. The findings yield a simple design principle for prototype-recoverability-aware training: every repulsive structural loss must be compensated by a compatible attractor, or it will collapse the latent geometry it was meant to refine.
\end{abstract}

\begin{keyword}
NN explainability \sep
training-data reconstruction \sep
hidden-unit specialization \sep
Gaussian MLP \sep
explicit training biases
\end{keyword}

\end{frontmatter}

\section{Introduction}
\label{sec:intro}

Understanding what training information is encoded in neural-network weights remains a basic question in machine learning. This question matters not only for privacy and data leakage, but also for explainability: if training shapes a latent space with interpretable internal structure, then part of the model's behavior may be understood through that structure rather than only through input--output responses. Representation-learning work has long argued that learning quality depends on how explanatory factors are organized in the internal representation~\cite{bengio2013representation}. Probe-based analyses make the same point from a diagnostic perspective, showing that intermediate layers can be inspected to reveal what information the network makes linearly accessible during learning~\cite{alain2017probes}. More broadly, interpretable machine learning has emphasized that understanding internal mechanisms is often more informative than post hoc explanations attached to an already trained black box~\cite{rudin2019stop}. This perspective also matters for certification: properties such as copyright compliance, fairness, or safety are easier to verify when latent structure is designed to be auditable rather than discovered after the fact.

A nearby line of work studies what training data can be recovered after training. Carlini et al.\ show that large language models may emit memorized training sequences under extraction attacks~\cite{carlini2021extracting}. Haim et al.\ show that, under favorable conditions, substantial fractions of training data may be reconstructed directly from trained network parameters~\cite{haim2022reconstructing}. These results establish that learned weights can retain recoverable training information, but they treat reconstruction as a post hoc problem: they do not ask how training itself might be biased toward hidden representations that are more specialized, more structured, easier to reconstruct or easier to audit against design-time requirements such as copyright compliance or safety constraints.

Another nearby line of work introduces regularizers that encourage hidden units to become less redundant or more diverse. Cogswell et al.\ reduce overfitting by decorrelating learned representations~\cite{cogswell2016decov}, and Oostwal et al.\ analyze hidden-unit specialization across activation functions~\cite{oostwal2021specialization}. Such work shows that explicit penalties can reshape the latent space during training. A second line of work studies the geometry that emerges when training balances \emph{attractive} and \emph{repulsive} forces between weights and features. Xie et al.\ show that under imbalanced classification the cross-entropy gradient on each classifier weight decomposes into an attractive component pulling that weight toward intra-class features and repulsive components pushed by inter-class features, and that imbalance between the two causes the so-called minority-collapse pathology~\cite{xie2023neural}. The structural losses studied in this paper are exactly such attractive and repulsive forces acting on the prototype geometry of a Gaussian MLP, and the failure mode we observe in the prototype-regression setting is closely analogous to that picture. Our setting differs in purpose: we study whether simple structural losses can bias hidden neurons toward specialized roles in a way that improves reconstruction of the training dataset from the learned weights.

The architectural family used here also has a long history. One-hidden-layer networks with Gaussian activations correspond functionally to radial basis function (RBF) networks~\cite{broomhead1988multivariable,moody1989fast}, which are universal approximators on compact sets~\cite{park1991universal}. Each Gaussian unit is parameterised by a center $-b_j/w_j$ and a width that is monotone in $|w_j|$, and these centers are exactly the prototype locations exploited below. The contribution of this paper is therefore not the architecture but the use of explicit structural training biases that act on those prototypes, and the controlled measurement of how those biases interact with reconstruction from weights.

We study that question in the smallest setting where it can be made precise: one-hidden-layer Gaussian MLPs trained on tiny scalar datasets. The goal is not to claim scalable exact recovery from weights. Rather, the goal is to isolate how three structural losses affect hidden-neuron specialization and whether the induced latent structure helps reconstruct the training dataset. This controlled regime makes the interaction between specialization and recoverability visible, exposes where training biases help or harm, and provides a canonical testbed for later extensions to richer architectures and larger learning settings.

The paper addresses three questions. \textbf{RQ1:} Which specialization bias, if any, improves prototype-based reconstruction over standard training? \textbf{RQ2:} Which of the three proposed structural losses is beneficial, neutral, or harmful when activated alone or in combination, and \emph{why}? \textbf{RQ3:} Do the same effects persist as the dataset size moves from the smallest non-trivial cases to larger minimal regimes? Section~\ref{sec:setup} defines the MLP setting and the three losses that make these questions precise. Section~\ref{sec:protocol} fixes the experimental protocol. Section~\ref{sec:results} reports and discusses the results, including an explicit mechanism that explains the qualitative ranking observed across sizes. Section~\ref{sec:limitations} states the scope and limitations of the study. Section~\ref{sec:repro} describes the released software, and Section~\ref{sec:conclusions} concludes.

\section{Minimal MLPs and Specialization Losses}
\label{sec:setup}

This section formalises the minimal MLP setting, the fitting loss, and the three structural losses for our experiments. Let
\begin{equation}
\label{eq:dataset}
D=\{(x_i,y_i)\}_{i=1}^{N},\qquad x_i,y_i\in[0,1],
\end{equation}
be a dataset of $N$ scalar pairs with strictly increasing $x_i$. We use a one-hidden-layer Gaussian MLP of width $H=N$,
\begin{equation}
\label{eq:model}
f_\theta(x)=\sum_{j=1}^{N} a_j\,\sigma(w_jx+b_j)+c,
\qquad \sigma(z)=\exp(-z^2),
\end{equation}
with parameters
\begin{equation}
\theta=\{(a_j,w_j,b_j)\}_{j=1}^{N}\cup\{c\}.
\end{equation}
The Gaussian activation localizes each hidden unit around the one-dimensional prototype
\begin{equation}
\label{eq:prototype}
\hat{x}_j=-\frac{b_j}{w_j},
\end{equation}
which induces a reconstructed output
\begin{equation}
\label{eq:recout}
\hat{y}_j=f_\theta(\hat{x}_j).
\end{equation}
Sorting the reconstructed pairs by increasing $\hat{x}_j$ yields the generated dataset
\begin{equation}
\label{eq:gend}
\hat{D}=\{(\hat{x}_j,\hat{y}_j)\}_{j=1}^{N}.
\end{equation}
We evaluate reconstruction by the permutation-invariant error
\begin{equation}
\label{eq:reconstruction_error}
E(D,\hat{D})=\min_{\pi\in S_N}\frac{1}{N}\sum_{i=1}^{N}\left(|x_i-\hat{x}_{\pi(i)}|+|y_i-\hat{y}_{\pi(i)}|\right),
\end{equation}
where $S_N$ is the symmetric group on $N$ elements. Equation~\eqref{eq:reconstruction_error} is computed exactly by linear-sum (Hungarian) assignment in the implementation, which avoids the bias of any fixed ordering.

Standard training minimizes only the fitting loss
\begin{equation}
\label{eq:lfit}
L_{fit}=\frac{1}{N}\sum_{i=1}^{N}\bigl(f_\theta(x_i)-y_i\bigr)^2.
\end{equation}
We augment it with three structural losses. The overlap loss penalizes several hidden units responding strongly to the same input,
\begin{equation}
\label{eq:lov}
L_{overlap}=\sum_{i=1}^{N}\sum_{1\le j<k\le N}\sigma(w_jx_i+b_j)\,\sigma(w_kx_i+b_k).
\end{equation}
The coverage loss encourages every sample to have a nearby prototype,
\begin{equation}
\label{eq:lcov}
L_{coverage}=\sum_{i=1}^{N}\min_{1\le j\le N}\Bigl(x_i+\tfrac{b_j}{w_j}\Bigr)^2.
\end{equation}
The separation loss discourages prototype collapse,
\begin{equation}
\label{eq:lsep}
L_{separation}=\sum_{1\le j<k\le N}\exp\!\left(-\frac{\bigl(\frac{b_j}{w_j}-\frac{b_k}{w_k}\bigr)^2}{\tau}\right).
\end{equation}
The total loss is
\begin{equation}
\label{eq:total_loss}
L=L_{fit}+m_o\lambda_oL_{overlap}+m_c\lambda_cL_{coverage}+m_s\lambda_sL_{separation},
\end{equation}
where the binary triple $(m_o,m_c,m_s)\in\{0,1\}^3$ defines an \emph{ablation mask}, written throughout as a 3-bit string in the order overlap--coverage--separation: $\mask{000}$ is the standard baseline (fit only), $\mask{010}$ activates only coverage, $\mask{111}$ activates all three structural terms, and so on. The latent space is considered more specialized, in the restricted prototype-usage sense used here, when the nearest-prototype assignments of the training points use more distinct neurons. We summarize this by the nearest-prototype specialization ratio
\begin{equation}
\label{eq:spec}
S=\frac{\bigl|\{\arg\min_{j}(x_i-\hat{x}_j)^2: i=1,\dots,N\}\bigr|}{N}\in[0,1].
\end{equation}

These definitions turn the general question of specialization into a concrete optimisation problem with eight masks and two derived metrics, $E$ and $S$. We can give a mechanical interpretation for the three structural losses that will recur throughout the analysis. Coverage~\eqref{eq:lcov} acts as an \emph{attractive} force: each input $x_i$ pulls its nearest prototype toward itself, and the loss decreases as that prototype moves into the input region. Separation~\eqref{eq:lsep} and overlap~\eqref{eq:lov} act as \emph{repulsive} forces between prototypes: separation through their pairwise distance, overlap through their joint activation on the inputs. We will show in Section~\ref{sec:results} that this attraction--repulsion taxonomy predicts which masks help reconstruction and which hurt it, and that overlap and separation differ in their behavior because they admit different routes to satisfying the constraint. Let us now explain our experiments.

\section{Experimental Protocol}
\label{sec:protocol}

This section fixes the synthetic-data setup, training hyperparameters, and statistical procedure used to compare the eight ablation masks across all dataset sizes. We use only synthetic datasets. For each $N\in\{3,5,10,30,50,100\}$, we sample $5$ independent datasets of $N$ pairs from the uniform distribution on $[0,1]^2$, and for each dataset we train $2$ random initializations under every mask. This yields $6\times 8\times 5\times 2=480$ runs in total.

The input coordinates are sorted and constrained to have a minimum separation. For the core regime $N\in\{3,5,10\}$ that separation is $0.05$. For the larger regimes we use the adaptive rule
\begin{equation}
\label{eq:delta}
\delta_N=\min\{0.05,\,0.5/N\},
\end{equation}
which preserves the small-$N$ setting while keeping larger cases feasible. Sampling under~\eqref{eq:delta} is implemented constructively: $N$ uniform draws on $[0,1-(N-1)\delta_N]$ are sorted and then shifted by $j\delta_N$ in the $j$-th order statistic, which yields a uniform-with-min-separation distribution and is reliable at every $N$.

The Gaussian MLP uses width $H=N$. We train with Adam at learning rate $\eta=10^{-3}$ for $200$ epochs with $\tau=10^{-2}$ and shared coefficients $\lambda_o=\lambda_c=\lambda_s=10^{-2}$ whenever the corresponding term is active. Parameters are initialised as
\begin{equation}
\label{eq:init}
a_j,b_j,c\sim\mathcal{U}(-10^{-1},10^{-1}),\quad
w_j\sim\mathcal{U}(-1,-10^{-1})\cup\mathcal{U}(10^{-1},1).
\end{equation}
After each gradient step we project every hidden weight as $|w_j|\ge 10^{-1}$ to keep the prototype map~\eqref{eq:prototype} numerically well defined.

For every run we record the reconstruction error~\eqref{eq:reconstruction_error}, the specialization ratio~\eqref{eq:spec}, the final values of $L_{fit}, L_{overlap}, L_{coverage}, \allowbreak\ L_{separation}$, and runtime statistics. We report means and standard deviations across the $10$ runs of each $(N,\text{mask})$ configuration. For statistical analysis we first test normality with Shapiro--Wilk and then use a Welch $t$-test when both groups are consistent with normality; otherwise we use a two-sided Mann--Whitney test. Pairwise comparisons are Holm-corrected and interpreted at significance level $\alpha=0.05$.

This protocol makes the loss comparison reproducible and controlled. We now turn to what the experiments reveal about specialization and reconstruction.

\section{Results and Discussion}
\label{sec:results}

We begin with reconstruction accuracy across all six dataset sizes. The binary correspondence is $\mask{000}{=}\text{Std}$, $\mask{001}{=}\text{Sep}$, $\mask{010}{=}\text{Cov}$, $\mask{011}{=}\text{Cov+Sep}$, $\mask{100}{=}\text{Ovl}$, $\mask{101}{=}\text{Ovl+Sep}$, $\mask{110}{=}\text{Ovl+Cov}$, $\mask{111}{=}\text{Full}$. Figure~\ref{fig:error_vs_n} shows all eight masks, and Tables~\ref{tab:ablation_core}--\ref{tab:ablation_scaling} report the corresponding means. The pattern is stable: coverage-only training ($\mask{010}$) gives the lowest mean reconstruction error at every size. The standard baseline ($\mask{000}$) and the coverage-active masks form a clear lower band. Masks that activate the overlap penalty ($\mask{100}$, $\mask{101}$, $\mask{110}$, $\mask{111}$) form an upper band that is consistently worse. This answers RQ1 and RQ2: prototype-based reconstruction can be improved in mean error by an explicit specialization bias, but not every such bias is beneficial.

\begin{figure}[!ht]
\centering
\includegraphics[width=1.0\linewidth]{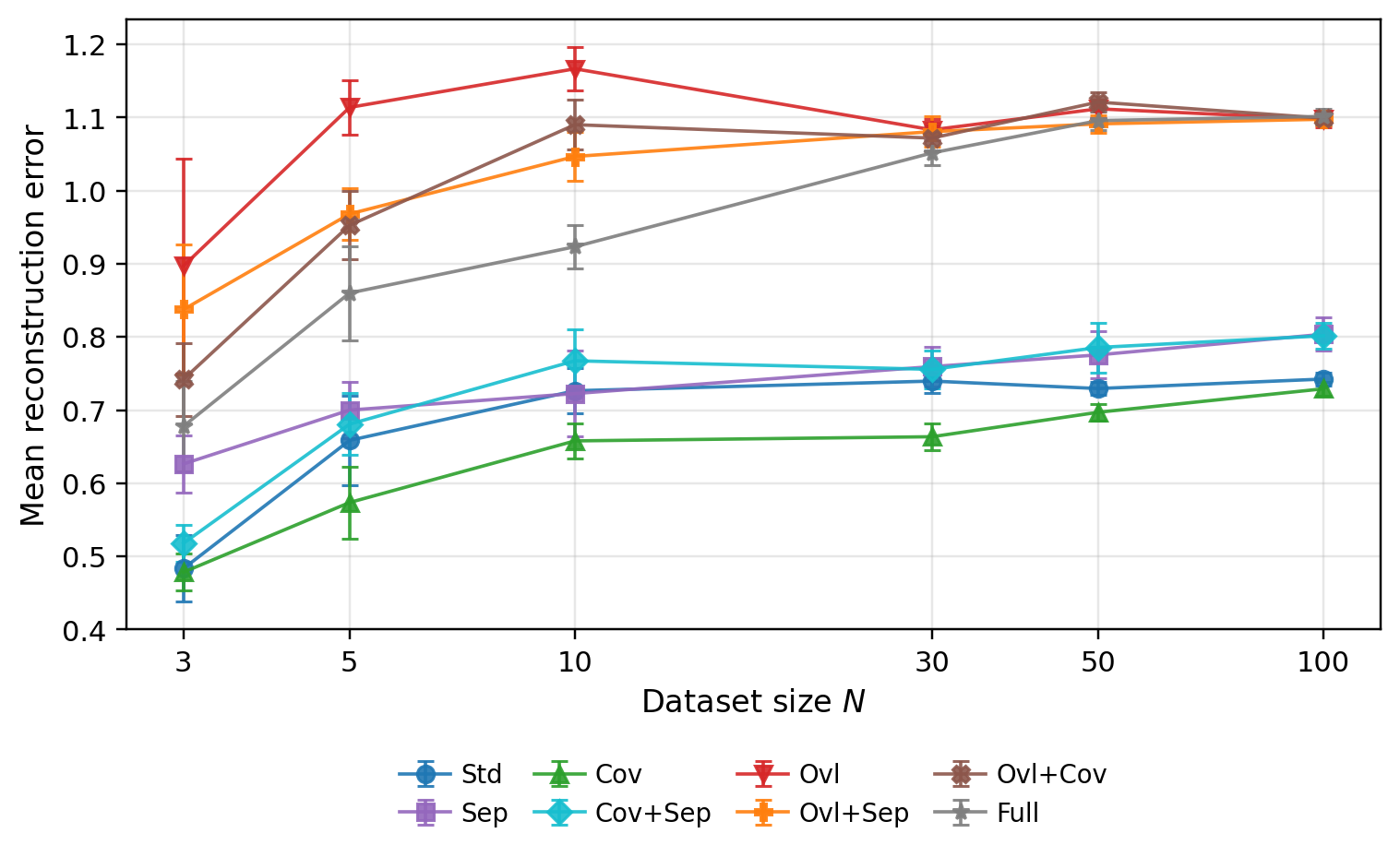}
\caption{Mean reconstruction error versus $N$ for all eight masks; error bars are SEM over $10$ runs per cell. The four coverage-favorable masks (Std, Sep, Cov, Cov+Sep) form a lower band; the four overlap-active masks (Ovl, Ovl+Sep, Ovl+Cov, Full) form a higher band. Coverage-only is the lowest at every $N$.}
\label{fig:error_vs_n}
\end{figure}

\begin{table}[!ht]
\centering
\small
\setlength{\tabcolsep}{4pt}
\caption{Mean reconstruction error for all masks in the core regime. Lower is better. The best value in each row is boldfaced.}
\label{tab:ablation_core}
\begin{tabular}{lrrrrrrrr}
\toprule
$N$ & \texttt{000} & \texttt{001} & \texttt{010} & \texttt{011} & \texttt{100} & \texttt{101} & \texttt{110} & \texttt{111} \\
\midrule
3 & 0.483 & 0.626 & \textbf{0.478} & 0.518 & 0.897 & 0.838 & 0.742 & 0.678 \\
5 & 0.659 & 0.700 & \textbf{0.574} & 0.681 & 1.114 & 0.968 & 0.953 & 0.860 \\
10 & 0.726 & 0.723 & \textbf{0.658} & 0.767 & 1.167 & 1.047 & 1.090 & 0.923 \\
\bottomrule
\end{tabular}

\end{table}

\begin{table}[!ht]
\centering
\small
\caption{Mean reconstruction error for all masks in the scalability regime. Lower is better. The best value in each row is boldfaced.}
\label{tab:ablation_scaling}
{\setlength{\tabcolsep}{4pt}\begin{tabular}{lrrrrrrrr}
\toprule
$N$ & \texttt{000} & \texttt{001} & \texttt{010} & \texttt{011} & \texttt{100} & \texttt{101} & \texttt{110} & \texttt{111} \\
\midrule
30 & 0.740 & 0.760 & \textbf{0.664} & 0.756 & 1.083 & 1.081 & 1.072 & 1.052 \\
50 & 0.729 & 0.775 & \textbf{0.697} & 0.786 & 1.112 & 1.091 & 1.121 & 1.096 \\
100 & 0.742 & 0.804 & \textbf{0.729} & 0.802 & 1.098 & 1.098 & 1.099 & 1.102 \\
\bottomrule
\end{tabular}
}
\end{table}

The heatmap of Figure~\ref{fig:heatmap_error} groups the masks by family rather than by binary index to make the structural split visible at a glance. The figure shows that the ranking is not an artifact of one or two values of $N$: $\mask{010}$ remains the brightest cell across the full range, $\mask{011}$ and $\mask{001}$ are competitive but weaker, and the overlap-based masks are uniformly darker. In particular, the full three-loss objective $\mask{111}$ does not fail because it contains four terms in total; it fails because adding the overlap term systematically worsens reconstruction relative to the corresponding masks without it.

\begin{figure}[!ht]
\centering
\includegraphics[width=1.0\linewidth]{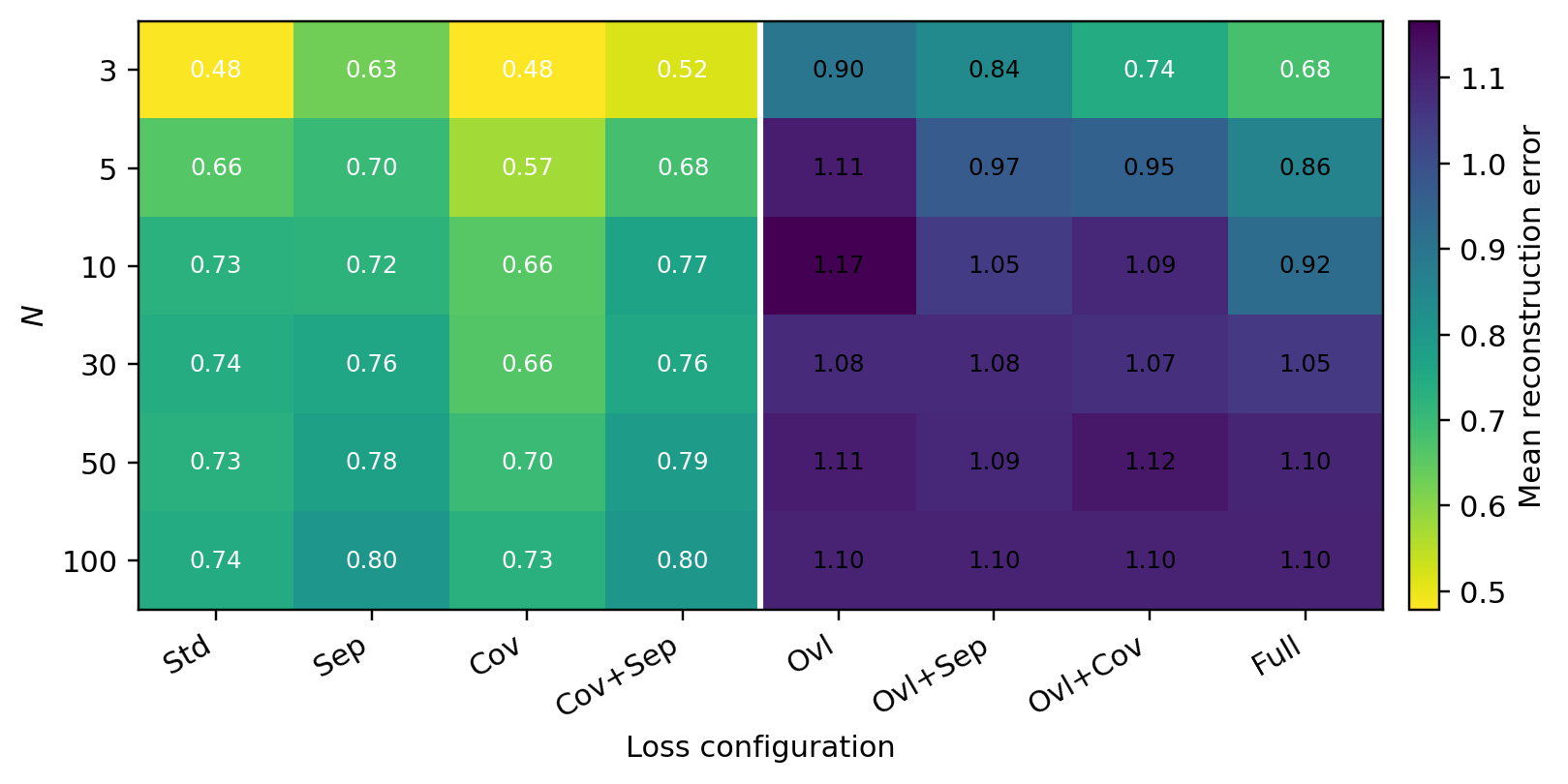}
\caption{Heatmap of mean reconstruction error across masks and dataset sizes (lighter is better). Masks are grouped by family rather than binary index; the white separator marks the boundary between the four overlap-free masks (left) and the four overlap-active masks (right).}
\label{fig:heatmap_error}
\end{figure}

To quantify how large the coverage advantage is, Table~\ref{tab:effect_sizes} reports absolute and relative improvements of $\mask{010}$ over $\mask{000}$ together with Cohen's~$d$ on the per-run reconstruction errors. The effect is non-monotonic in $N$: it is essentially absent at $N=3$ ($d=0.04$, $\sim 1\%$ relative reduction), rises through the mid-range to a peak at $N=30$ ($d=1.40$, $\sim 10\%$), and shrinks again at $N=100$ ($d=0.46$, $\sim 2\%$). The qualitative ordering in Tables~\ref{tab:ablation_core}--\ref{tab:ablation_scaling} therefore tells only part of the story: coverage is reliably best, but its absolute advantage over the baseline is modest at the extremes, and substantial only in a middle regime where the prototype geometry has both enough degrees of freedom to misbehave and enough data points to reward a coverage prior.

\begin{table}[!ht]
\centering
\caption{Effect of coverage-only training ($\mask{010}$) over the standard baseline ($\mask{000}$) on reconstruction error. $\Delta E=E_{\mathtt{000}}-E_{\mathtt{010}}$; relative reduction is $\Delta E/E_{\mathtt{000}}$ in percent; $d$ is Cohen's~$d$ from the per-run distributions.}
\label{tab:effect_sizes}
\begin{tabular}{rrrrrr}
\toprule
$N$ & $E_{\mathtt{000}}$ & $E_{\mathtt{010}}$ & $\Delta E$ & rel.\ red.\ (\%) & Cohen $d$ \\
\midrule
3 & 0.483 & 0.478 & 0.005 & 1.0 & 0.04 \\
5 & 0.659 & 0.574 & 0.085 & 12.9 & 0.49 \\
10 & 0.726 & 0.658 & 0.069 & 9.4 & 0.78 \\
30 & 0.740 & 0.664 & 0.076 & 10.3 & 1.40 \\
50 & 0.729 & 0.697 & 0.032 & 4.4 & 1.01 \\
100 & 0.742 & 0.729 & 0.013 & 1.8 & 0.46 \\
\bottomrule
\end{tabular}

\end{table}

A natural follow-up question is whether overlap masks harm reconstruction by harming optimization itself, or by reshaping the prototype geometry without harming the fit. Table~\ref{tab:fit_loss_family} reports the mean $L_{fit}$ averaged within each family of four masks (overlap-free vs.\ overlap-active). The two families fit the data essentially equally well: the harmful-family fit is $0.98$--$1.09$ times the useful-family fit at every $N$. The reconstruction degradation in Tables~\ref{tab:ablation_core}--\ref{tab:ablation_scaling} therefore comes from prototype placement, not from optimisation failure. This rules out the simplest counter-explanation that overlap penalties merely make training fail.

\begin{table}[!ht]
\centering
\caption{Mean fitting loss $L_{fit}$ averaged within the overlap-free family ($\mask{000}$, $\mask{001}$, $\mask{010}$, $\mask{011}$) and the overlap-active family ($\mask{100}$, $\mask{101}$, $\mask{110}$, $\mask{111}$). The two families fit the data within $\sim 10\%$ of each other; reconstruction differences therefore come from prototype placement, not from optimization failure.}
\label{tab:fit_loss_family}
\begin{tabular}{rrrr}
\toprule
$N$ & $\overline{L_{fit}}$ (no-Ovl) & $\overline{L_{fit}}$ (with-Ovl) & ratio \\
\midrule
3 & 0.0702 & 0.0688 & 0.979 \\
5 & 0.0639 & 0.0687 & 1.076 \\
10 & 0.0849 & 0.0927 & 1.092 \\
30 & 0.0863 & 0.0912 & 1.058 \\
50 & 0.0839 & 0.0886 & 1.056 \\
100 & 0.0873 & 0.0894 & 1.024 \\
\bottomrule
\end{tabular}

\end{table}

\subsection{Why Overlap is Harmful: The Expulsion Mechanism}
\label{sec:mechanism}

Table~\ref{tab:fit_loss_family} shows that overlap-active masks fit the data essentially as well as overlap-free masks but reconstruct it much worse. The harm therefore comes from where the trained model puts its prototypes, not from how well it fits. The shape of the overlap loss explains why. Equation~\eqref{eq:lov} is a sum of pairwise products of activations evaluated at the training inputs. Because each Gaussian activation $\sigma(w_jx+b_j)=\exp(-(w_jx+b_j)^2)$ decays as $|w_jx+b_j|\to\infty$, the optimizer can drive every overlap term containing prototype $\hat{x}_j$ to zero by pushing $\hat{x}_j=-b_j/w_j$ outside the convex hull of the training inputs. An expelled prototype activates negligibly at every training point, so its contribution to every pairwise product vanishes. The loss landscape of \eqref{eq:lov} therefore has a degenerate global minimum at $\sigma(w_jx_i+b_j)\approx 0$ for all $(i,j)$, attainable by simply moving prototypes away from $[0,1]$.

The other two structural terms behave very differently under this lens. Coverage~\eqref{eq:lcov} \emph{cannot} reward expulsion: pushing $\hat{x}_j$ outside $[0,1]$ either makes it the nearest prototype to some input (in which case the squared distance grows and the loss increases) or makes it irrelevant to every input (in which case the loss is unchanged). Coverage is therefore an attractor that holds prototypes inside the input region. Separation~\eqref{eq:lsep} admits the same expulsion route as overlap in principle, but the chosen $\tau=10^{-2}$ controls how strongly. The exponential $\exp\bigl(-(\hat{x}_j-\hat{x}_k)^2/\tau\bigr)$ is essentially zero once $|\hat{x}_j-\hat{x}_k|\gg\sqrt{\tau}\approx 0.1$, so prototypes can satisfy the separation constraint without leaving the input range. We expect, and verify below, that increasing $\tau$ should make separation increasingly expulsive.

Three theorems that characterize the considered structural loss functions are given next, and proved in Appendix A:

\begin{theorem}[Coverage cannot reward expulsion]
\label{thm:coverage_hull_main}
Let $D=\{x_i\}_{i=1}^{N}\subset[0,1]$ be the training inputs and
let $\hat{x}_j=-b_j/w_j$ be the $j$-th prototype.
If $\hat{x}_j\notin[0,1]$ for some $j$, then
moving $\hat{x}_j$ further away from $[0,1]$
does not decrease $L_{\mathrm{coverage}}$.
\end{theorem}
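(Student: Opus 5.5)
The plan is to treat $L_{\mathrm{coverage}}$ as a function of the single prototype $\hat{x}_j$, with all other prototypes held fixed, and show that it is non-decreasing along any path that increases the distance from $\hat{x}_j$ to $[0,1]$. Since $b_j/w_j=-\hat{x}_j$, each summand in \eqref{eq:lcov} is $\min_k (x_i-\hat{x}_k)^2$. Write
\[
L_{\mathrm{coverage}}=\sum_{i=1}^{N} \min\bigl\{(x_i-\hat{x}_j)^2,\; m_i\bigr\},\qquad m_i=\min_{k\neq j}(x_i-\hat{x}_k)^2 ,
\]
where $m_i$ does not depend on $\hat{x}_j$. If $N=1$, or if we adopt the convention $m_i=+\infty$ when no other prototype exists, the argument below is unchanged.

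The key step is pointwise monotonicity of each summand. By symmetry, suppose $\hat{x}_j>1$. ``Moving further away'' means replacing $\hat{x}_j$ by some $\hat{x}_j'>\hat{x}_j$; the case $\hat{x}_j<0$ is handled symmetrically by $\hat{x}_j'<\hat{x}_j$. For every training input we have $x_i\le 1<\hat{x}_j<\hat{x}_j'$, so $|x_i-\hat{x}_j'|>|x_i-\hat{x}_j|$, and hence $(x_i-\hat{x}_j')^2>(x_i-\hat{x}_j)^2$. The map $t\mapsto\min\{t,m_i\}$ is non-decreasing, so each summand weakly increases. Summing over $i$ gives $L_{\mathrm{coverage}}(\hat{x}_j')\ge L_{\mathrm{coverage}}(\hat{x}_j)$. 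This also reproduces the dichotomy in the text. If $\hat{x}_j$ is the nearest prototype for some $x_i$, that term strictly increases, at least until another prototype takes over. If it is nearest for no input, every term is unchanged.

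Two points need care, and I expect them to be the only real obstacles; neither is deep. First, ``moving'' should be made precise: either as the discrete comparison above, or as a continuous path along which $\operatorname{dist}(\hat{x}_j,[0,1])$ is monotone and $\hat{x}_j$ stays on one side of the interval. It must also be said that the movement is realised in parameter space, for instance by varying $b_j$ with $w_j$ fixed. Since the $w_j$ are projected to $|w_j|\ge 10^{-1}$, the map \eqref{eq:prototype} is well defined, and any target $\hat{x}_j'$ is reachable. Second, the minimum in \eqref{eq:lcov} is non-smooth at ties, so I will avoid derivatives and rely only on the monotonicity of $\min$. That argument is valid irrespective of ties.

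Finally, I would note that the argument uses only $x_i\in[0,1]$ and that $\hat{x}_j$ lies outside the hull. The proof therefore holds verbatim with $[0,1]$ replaced by $[x_1,x_N]$, which gives the slightly stronger convex-hull version alluded to in Section~\ref{sec:mechanism}.
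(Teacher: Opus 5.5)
Your proof is correct and is essentially the paper's argument. You reduce to $\hat{x}_j>1$ by symmetry, note that $|x_i-\hat{x}_j'|>|x_i-\hat{x}_j|$ for every $x_i\in[0,1]$ whenever $\hat{x}_j'>\hat{x}_j$, and conclude termwise by monotonicity of the minimum, which your split $\min\{(x_i-\hat{x}_j)^2,m_i\}$ makes explicit where the paper leaves it implicit (your observation that the argument also works with $[0,1]$ replaced by $[x_1,x_N]$ is valid too).
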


\begin{theorem}[Overlap loss vanishes with separation]
\label{thm:overlap_vanishes_main}
Let $\Delta=\min_{j\neq k}|\hat{x}_j-\hat{x}_k|$ be the minimum
pairwise distance between prototypes, and assume $|w_j|\ge w_{\min}>0$
for all $j$.  Then
\[
  L_{\mathrm{overlap}}\;\le\;
  \binom{N}{2}N\,e^{-w_{\min}^2\Delta^2/2}\;\xrightarrow{\Delta\to\infty}\;0.
\]
\end{theorem}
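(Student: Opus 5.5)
The statement to prove is the last one shown, Theorem~\ref{thm:overlap_vanishes_main}. The plan is to bound each summand of \eqref{eq:lov} separately and then count summands. First rewrite each activation in prototype form. Since $w_jx+b_j=w_j(x-\hat{x}_j)$,
\[
\sigma(w_jx_i+b_j)=\exp\!\bigl(-w_j^2(x_i-\hat{x}_j)^2\bigr)\le \exp\!\bigl(-w_{\min}^2(x_i-\hat{x}_j)^2\bigr),
\]
where the hypothesis $|w_j|\ge w_{\min}$ is used. A single product term is then controlled by
\[
\sigma(w_jx_i+b_j)\,\sigma(w_kx_i+b_k)\le \exp\!\Bigl(-w_{\min}^2\bigl[(x_i-\hat{x}_j)^2+(x_i-\hat{x}_k)^2\bigr]\Bigr).
\]

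The key step is a point-independent lower bound on the bracket. By the elementary inequality $u^2+v^2\ge \tfrac12(u-v)^2$, obtained from $(u+v)^2\ge0$, applied with $u=x_i-\hat{x}_j$ and $v=x_i-\hat{x}_k$, we get
\[
(x_i-\hat{x}_j)^2+(x_i-\hat{x}_k)^2\ge \tfrac12(\hat{x}_j-\hat{x}_k)^2\ge \tfrac12\Delta^2 .
\]
The last step uses the definition of $\Delta$ as the minimum pairwise prototype distance. Hence every product is at most $e^{-w_{\min}^2\Delta^2/2}$, uniformly in $i$ and in the pair $j<k$. No assumption that $x_i\in[0,1]$ is needed, which is worth remarking.

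Summing over the $N$ inputs and the $\binom{N}{2}$ pairs gives exactly
\[
L_{\mathrm{overlap}}\le\binom{N}{2}N\,e^{-w_{\min}^2\Delta^2/2}.
\]
For fixed $N$ and $w_{\min}>0$ the right-hand side tends to $0$ as $\Delta\to\infty$.

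There is no real obstacle here. The only point requiring care is the minimization over $x_i$, which is where the factor $1/2$ in the exponent comes from: the worst case is an input at the midpoint of the two prototypes. I would state explicitly that this factor is sharp for a single pair, so the bound is not loose in $\Delta$. I would also note that the projection $|w_j|\ge10^{-1}$ used in the protocol supplies $w_{\min}$ in practice.
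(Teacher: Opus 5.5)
Your proof is correct and follows the paper's argument: rewrite each activation as $\exp\bigl(-w_j^2(x_i-\hat x_j)^2\bigr)$, bound it using $w_{\min}$, lower-bound the bracket by $\tfrac12(\hat x_j-\hat x_k)^2\ge\tfrac12\Delta^2$, and sum over the $N\binom{N}{2}$ terms. The only cosmetic difference is that the paper gets the $\tfrac12$ from the completed-square identity $2\bigl(x_i-\tfrac{\hat x_j+\hat x_k}{2}\bigr)^2+\tfrac12(\hat x_j-\hat x_k)^2$, whereas you use the equivalent inequality $(u+v)^2\ge 0$.
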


\begin{theorem}[Separation loss vanishes with separation]
\label{thm:separation_vanishes_main}
Let $\Delta=\min_{j\neq k}|\hat{x}_j-\hat{x}_k|$ and let $\tau>0$.
Then
\[
  L_{\mathrm{separation}}\;\le\;
  \binom{N}{2}\exp\!\Bigl(-\tfrac{\Delta^2}{\tau}\Bigr)
  \;\xrightarrow{\Delta\to\infty}\;0.
\]
\end{theorem}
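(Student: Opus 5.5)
The plan is a termwise bound followed by summation. First I will rewrite the exponent of each summand of \eqref{eq:lsep} in terms of prototypes. Since $\hat{x}_j=-b_j/w_j$ by \eqref{eq:prototype}, we have $\frac{b_j}{w_j}-\frac{b_k}{w_k}=-(\hat{x}_j-\hat{x}_k)$. The square removes the sign, so each summand equals $\exp\bigl(-(\hat{x}_j-\hat{x}_k)^2/\tau\bigr)$. This uses only that every $w_j\neq 0$, which the hypothesis-free setting guarantees through the projection $|w_j|\ge 10^{-1}$ of Section~\ref{sec:protocol}. I would state this explicitly so that the prototypes are well defined.

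Next, for any pair $j<k$ the definition of $\Delta$ as a minimum gives $|\hat{x}_j-\hat{x}_k|\ge\Delta\ge 0$, hence $(\hat{x}_j-\hat{x}_k)^2\ge\Delta^2$. Because $\tau>0$ and $t\mapsto e^{-t/\tau}$ is decreasing on $[0,\infty)$, each summand is at most $\exp(-\Delta^2/\tau)$. Summing over the $\binom{N}{2}$ unordered pairs with $j<k$ yields
\[
L_{\mathrm{separation}}\le\binom{N}{2}\exp\!\Bigl(-\tfrac{\Delta^2}{\tau}\Bigr).
\]

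Finally, for fixed $N$ and $\tau$ the prefactor is a constant and $\exp(-\Delta^2/\tau)\to 0$ as $\Delta\to\infty$, which gives the stated limit. I do not expect a genuine obstacle; the argument is essentially monotonicity of the exponential. The only points needing care are bookkeeping ones. The first is the sign and squaring when passing from $b_j/w_j$ to $\hat{x}_j$. The second is that the limit is taken with $N$ and $\tau$ held fixed. I would also add a remark tying this to the expulsion discussion: since all $N$ prototypes cannot be pairwise farther apart than about $1/(N-1)$ while staying inside $[0,1]$, making $\Delta$ large forces prototypes outside the convex hull of the inputs. The bound is only tight in the large-$\tau$ regime discussed above.
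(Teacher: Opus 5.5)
Your proposal is correct and follows the paper's proof: you bound each summand by $\exp(-\Delta^2/\tau)$ using $|\hat{x}_j-\hat{x}_k|\ge\Delta$ and the monotonicity of the exponential, sum over the $\binom{N}{2}$ pairs, and let $\Delta\to\infty$ with $N$ and $\tau$ fixed. Your extra bookkeeping on the sign of $b_j/w_j$ versus $\hat{x}_j$ and on $w_j\neq 0$ is a harmless refinement that the paper leaves implicit.
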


As seen, Theorem \ref{thm:coverage_hull_main} proves that the coverage structural loss cannot cause the prototypes to be expelled from the convex hull of the training set. In other words, $L_{\mathrm{coverage}}$ cannot lead to unstable network configurations in which the prototypes depart from the training data. On the other hand, Theorems \ref{thm:overlap_vanishes_main} and \ref{thm:separation_vanishes_main} demonstrate that the other two structural losses, $L_{\mathrm{overlap}}$ and $L_{\mathrm{separation}}$, admit expelled configurations as (approximate) global minimizers.

We found direct evidence of the above results in prototype positions at $N=100$. Figure~\ref{fig:expulsion} shows the prototype positions of one median-error run of three masks at $N=100$. Under standard training ($\mask{000}$) and under coverage ($\mask{010}$) about half of the prototypes lie inside the input range $[0,1]$ and the rest cluster just below zero, far enough to be passively inactive but not catastrophically so: the in-hull prototypes carry the reconstruction. Under overlap-only ($\mask{100}$), all $100$ prototypes are pushed outside $[0,1]$, with the entire population sitting between $-1$ and $0$. With every prototype expelled, no in-hull neuron remains to support a faithful reconstruction, and the error rises by $\sim 50\%$ relative to the other two masks.

\begin{figure}[!ht]
\centering
\includegraphics[width=1.0\linewidth]{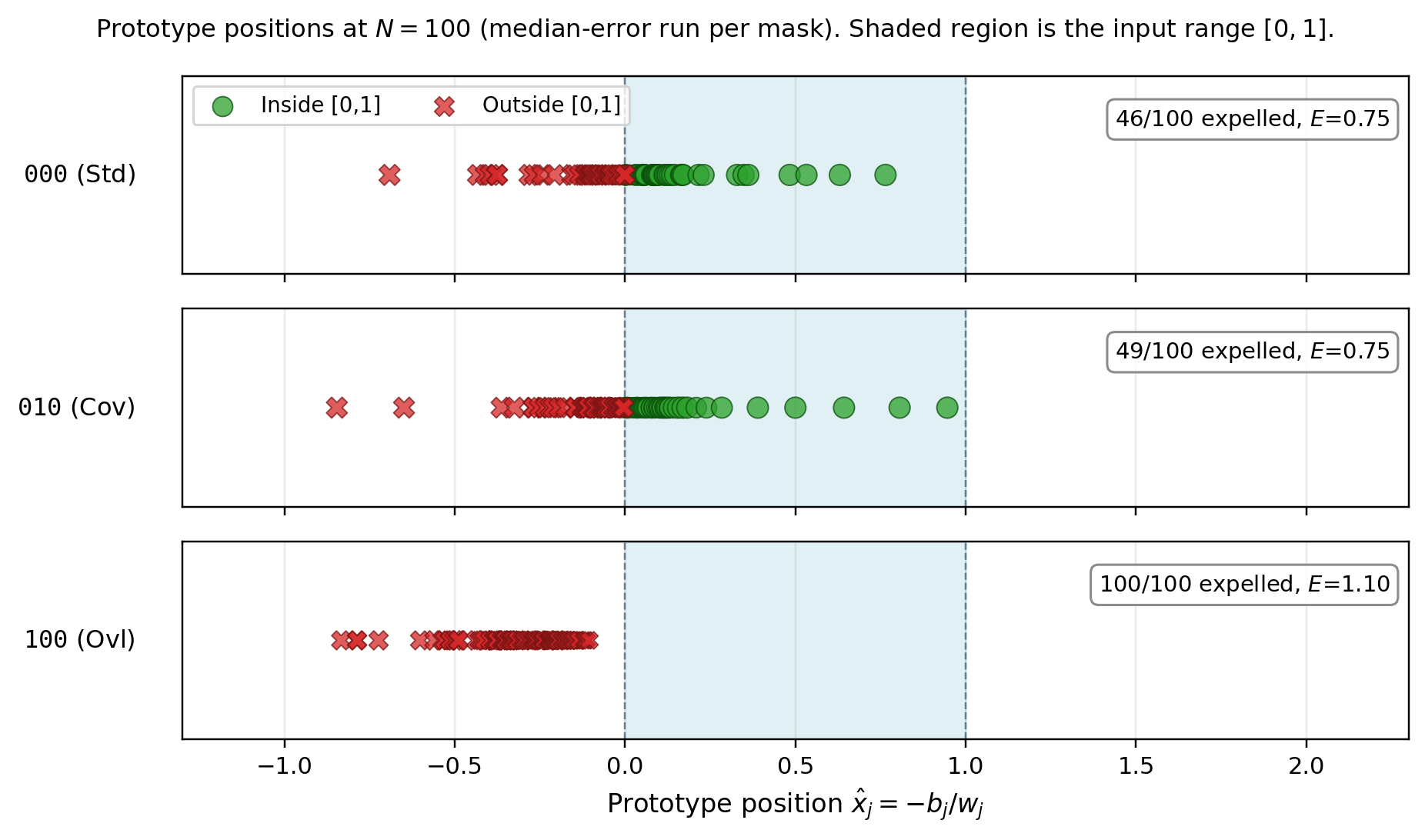}
\caption{Prototype positions $\hat{x}_j=-b_j/w_j$ for one median-error run of each of three masks at $N=100$. The shaded region is the input range $[0,1]$. Marker size is proportional to the prototype's mean activation across the training inputs. Under $\mask{000}$ and $\mask{010}$ a working subset of prototypes remains inside the input range; under $\mask{100}$ all $100$ prototypes are expelled.}
\label{fig:expulsion}
\end{figure}

Table~\ref{tab:expulsion} reports the fraction of prototypes outside $[0,1]$, averaged over the ten runs of each $(N,\text{mask})$ cell. The pattern is sharp: the four overlap-free masks have expulsion fractions clustered between $0.20$ and $0.61$ and bounded away from full expulsion, while the four overlap-active masks reach $1.00$ at every $N\ge 30$ and are above $0.85$ already at $N=10$. The harm caused by overlap therefore is not a gradual drift; it is a phase transition to a completely expelled equilibrium.

\begin{table}[!ht]
\centering
\caption{Mean fraction of prototypes whose center $\hat{x}_j=-b_j/w_j$ lies outside the input range $[0,1]$, by $(N,\text{mask})$. Overlap-active masks reach $1.00$ at every $N\ge 30$.}
\label{tab:expulsion}
\begin{tabular}{lrrrrrrrr}
\toprule
$N$ & \texttt{000} & \texttt{001} & \texttt{010} & \texttt{011} & \texttt{100} & \texttt{101} & \texttt{110} & \texttt{111} \\
\midrule
3 & 0.20 & 0.47 & 0.30 & 0.37 & 0.77 & 0.73 & 0.70 & 0.57 \\
5 & 0.38 & 0.48 & 0.22 & 0.50 & 1.00 & 0.76 & 0.88 & 0.72 \\
10 & 0.45 & 0.50 & 0.41 & 0.51 & 1.00 & 0.90 & 0.96 & 0.86 \\
30 & 0.58 & 0.61 & 0.51 & 0.60 & 1.00 & 1.00 & 1.00 & 0.99 \\
50 & 0.50 & 0.57 & 0.47 & 0.58 & 1.00 & 1.00 & 1.00 & 1.00 \\
100 & 0.52 & 0.60 & 0.52 & 0.60 & 1.00 & 1.00 & 1.00 & 1.00 \\
\bottomrule
\end{tabular}

\end{table}

The expulsion mechanism also explains the specialization-collapse pattern of Figure~\ref{fig:specialization_vs_n} and Figure~\ref{fig:heatmap_specialization}. Once a prototype is expelled, its activation at every training input is small, and it is therefore never the argmin in~\eqref{eq:spec}; only in-hull prototypes can attract a nearest-prototype assignment. With overlap-active masks expelling essentially all prototypes at $N\ge 30$, the inputs effectively share a single residual in-hull neuron and the specialization ratio collapses to $S\approx 1/N$, matching the values $S\le 0.06$ at $N\ge 30$ and $S\approx 0.01$ at $N=100$ that the figures and Table~\ref{tab:specialization_scaling} will report in the next section.

\subsection{Confirmation by a $\tau$-sweep on the Separation-only Mask} 

The mechanism predicts that separation alone, run at much larger $\tau$, should also expel prototypes and degrade reconstruction. Figure~\ref{fig:tau_sweep} and Table~\ref{tab:tau_sweep} report a $\tau$-sweep on mask $\mask{001}$ at $N=30$ over five orders of magnitude. As $\tau$ increases from $10^{-3}$ to $10$, the fraction of prototypes outside $[0,1]$ rises from $0.49$ to $0.79$, the mean signed distance from the input convex hull grows from $0.16$ to $1.36$, and the reconstruction error grows from $0.69$ to $1.78$, thus becoming worse than the typical overlap-active baseline. The paper's chosen value $\tau=10^{-2}$ sits in the well-behaved regime where pairs of prototypes can satisfy the separation constraint without leaving the input range, which is why separation behaves as a mild rather than catastrophic repulsor in our main results.

\begin{figure}[!ht]
\centering
\includegraphics[width=1.0\linewidth]{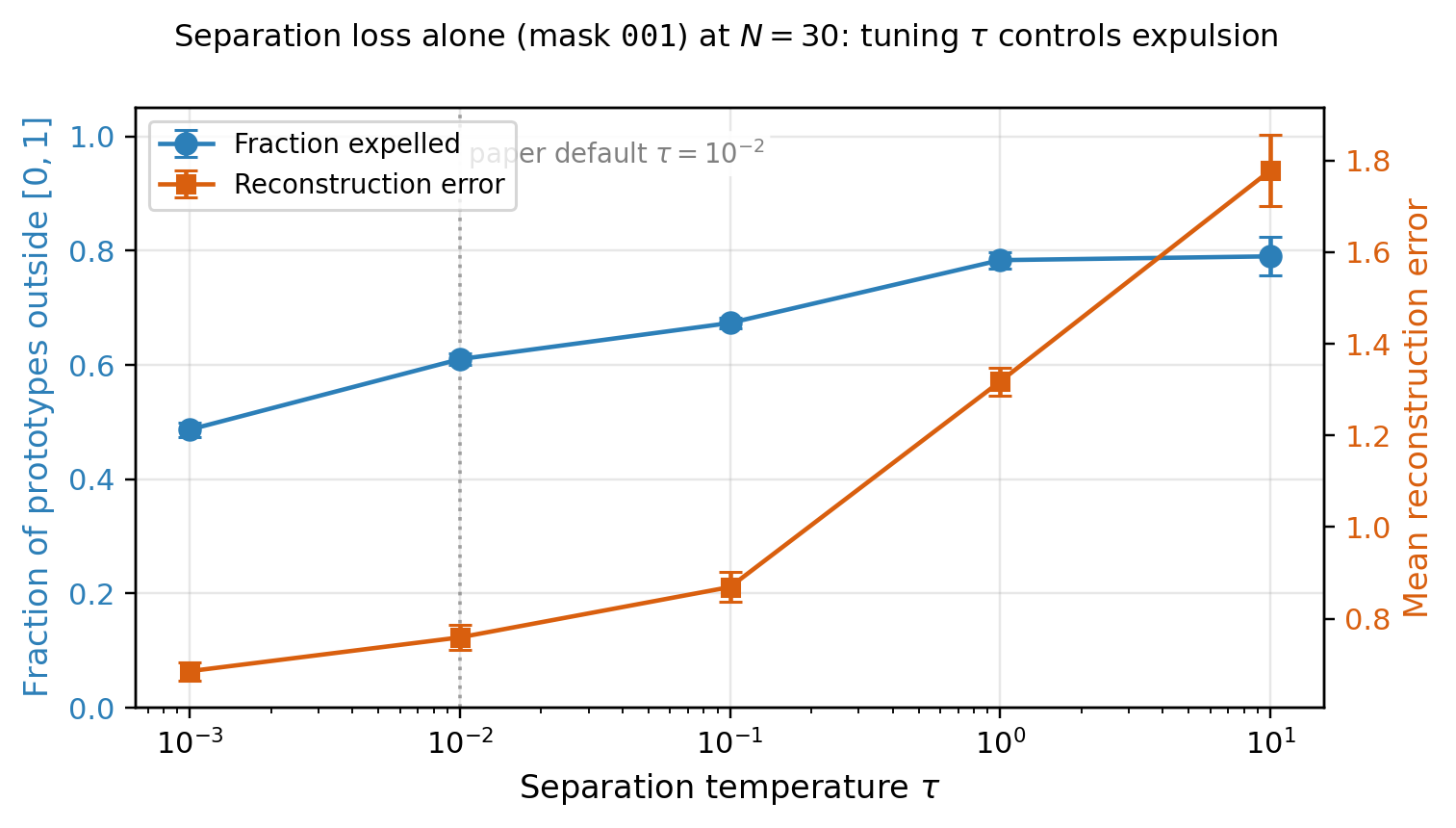}
\caption{$\tau$-sweep on the separation-only mask $\mask{001}$ at $N=30$. As $\tau$ grows, the separation loss admits a stronger expulsion route, the fraction of expelled prototypes rises, and reconstruction error degrades. The paper's default $\tau=10^{-2}$ lies in the well-behaved regime.}
\label{fig:tau_sweep}
\end{figure}

\vspace{-0.3cm}

\begin{table}[!ht]
\centering
\caption{$\tau$-sweep on the separation-only mask $\mask{001}$ at $N=30$. ``frac.\ expelled'' is the average fraction of prototypes outside $[0,1]$; ``dist.\ from hull'' is the average distance from the input convex hull (zero if inside); $E$ is mean reconstruction error; $S$ is the specialization ratio.}
\label{tab:tau_sweep}
\begin{tabular}{rrrrr}
\toprule
$\tau$ & frac.\ expelled & dist.\ from hull & $E$ & $S$ \\
\midrule
0.001 & 0.49 & 0.16 & 0.686 & 0.503 \\
0.01 & 0.61 & 0.31 & 0.760 & 0.403 \\
0.1 & 0.67 & 0.47 & 0.869 & 0.350 \\
1 & 0.78 & 0.92 & 1.318 & 0.230 \\
10 & 0.79 & 1.36 & 1.778 & 0.220 \\
\bottomrule
\end{tabular}

\end{table}

The three structural losses therefore divide along a single axis. Coverage is the only attractor and is the only consistently useful term. Separation and overlap are both repulsors; the difference between them is whether the loss landscape rewards full expulsion (overlap, always) or only at extreme hyperparameter settings (separation, only when $\tau$ is large enough that local separation is insufficient). Adding overlap to any base mask routes that mask to the expelled equilibrium, which is why $\mask{1\!**}$ are systematically worse than their $\mask{0\!**}$ counterparts in Tables~\ref{tab:ablation_core}--\ref{tab:ablation_scaling}.

Specialization behaves differently from reconstruction but supports the same broad conclusion. Figure~\ref{fig:specialization_vs_n}, Figure~\ref{fig:heatmap_specialization}, and Tables~\ref{tab:specialization_core}--\ref{tab:specialization_scaling} show that coverage raises the specialization ratio above the standard baseline from $N=5$ onwards; at $N=3$ the four overlap-free masks all sit close to one another, with the gap well within the run-to-run variance, and the standard baseline happens to attain the column maximum. Separation can increase specialization further, but as already noted, that increase does not translate into the best reconstruction error. The collapse of $S$ toward $1/N$ for overlap-active masks at large $N$ is the same expulsion phenomenon viewed from the specialization side, as discussed in Section~\ref{sec:mechanism}.

\begin{figure}[!ht]
\centering
\includegraphics[width=1.0\linewidth]{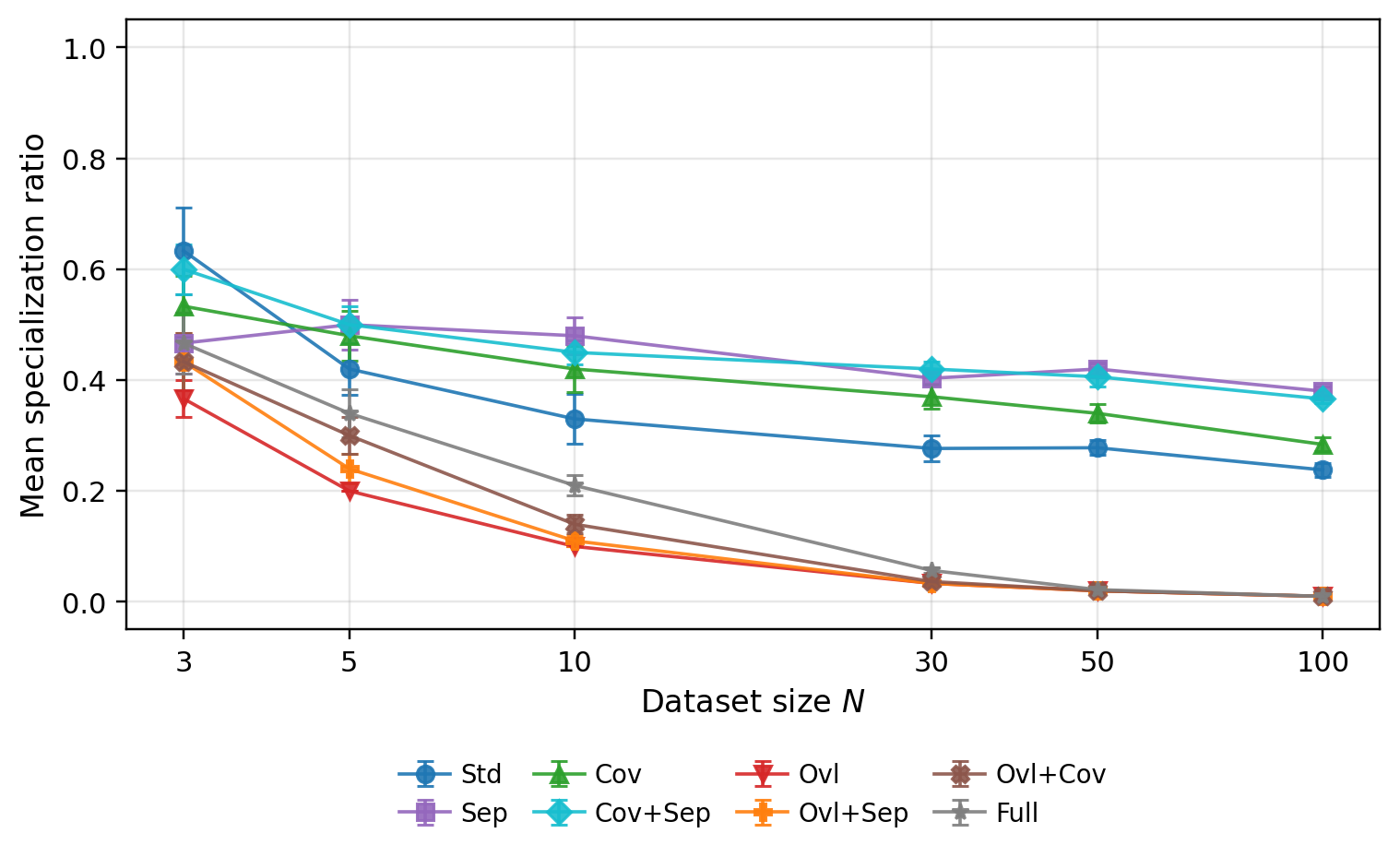}
\caption{Mean specialization ratio versus $N$ for all eight masks; error bars are SEM. Coverage and Cov+Sep raise specialization above the standard baseline at every $N$; overlap-active masks collapse toward zero as $N$ grows.}
\label{fig:specialization_vs_n}
\end{figure}

\begin{figure}[!ht]
\centering
\includegraphics[width=1.0\linewidth]{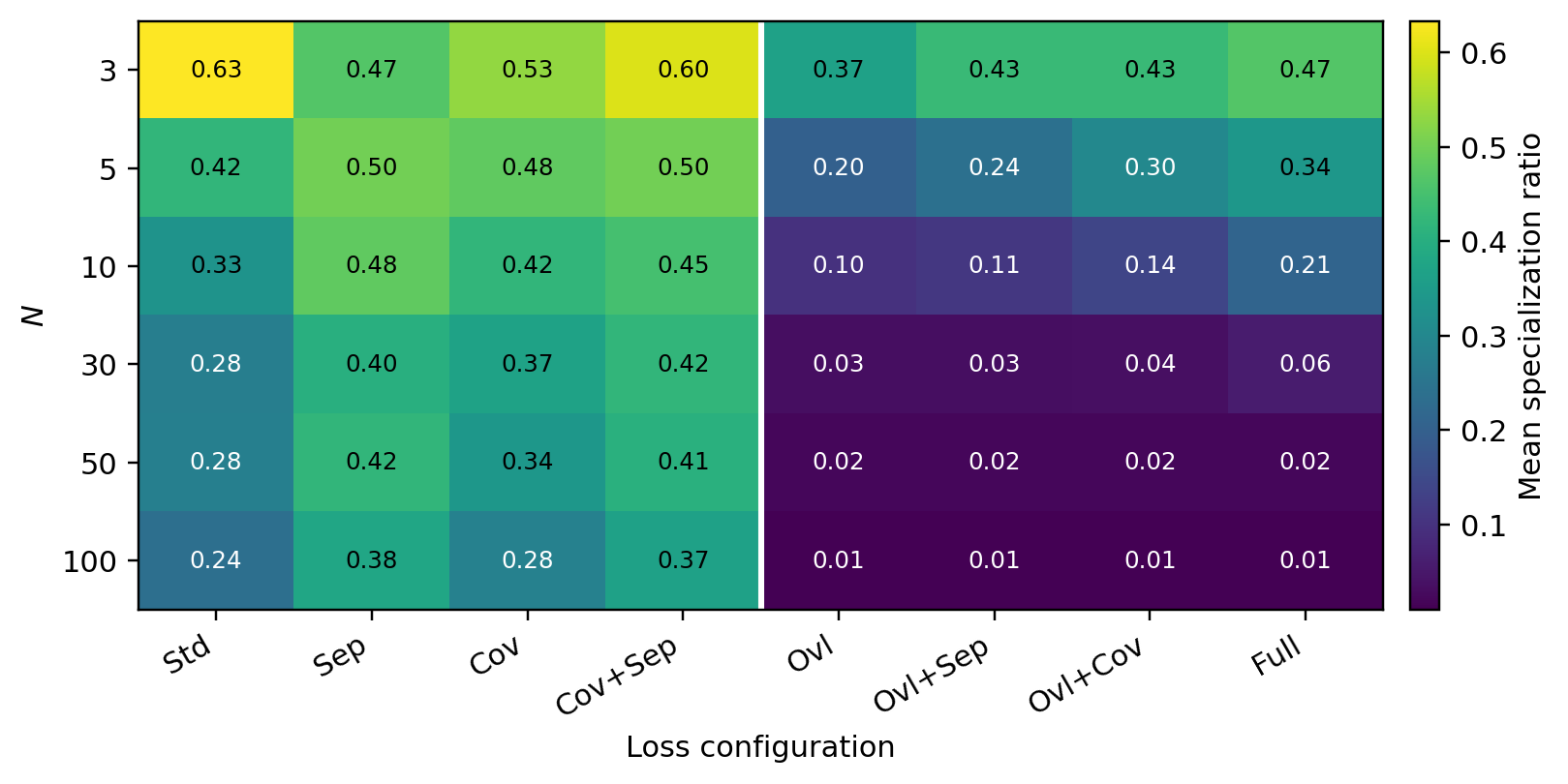}
\caption{Heatmap of mean specialization ratio across masks and dataset sizes (lighter is better). The split between the four overlap-free masks and the four overlap-active masks becomes sharp as $N$ grows.}
\label{fig:heatmap_specialization}
\end{figure}

\begin{table}[!ht]
\centering
\caption{Specialization ratio (mean $\pm$ std) for the standard baseline, coverage-only loss, and full three-loss objective in the core regime. Higher is better. The best value in each column is boldfaced.}
\label{tab:specialization_core}
\begin{tabular}{lrrr}
\toprule
Method & $N=3$ & $N=5$ & $N=10$ \\
\midrule
Std & \textbf{0.633 $\pm$ 0.246} & 0.420 $\pm$ 0.148 & 0.330 $\pm$ 0.142 \\
Cov & 0.533 $\pm$ 0.172 & \textbf{0.480 $\pm$ 0.140} & \textbf{0.420 $\pm$ 0.132} \\
Full & 0.467 $\pm$ 0.172 & 0.340 $\pm$ 0.135 & 0.210 $\pm$ 0.057 \\
\bottomrule
\end{tabular}

\end{table}

\begin{table}[!ht]
\centering
\caption{Specialization ratio (mean $\pm$ std) for the standard baseline, coverage-only loss, and full three-loss objective in the scalability regime. Higher is better. The best value in each column is boldfaced.}
\label{tab:specialization_scaling}
\begin{tabular}{lrrr}
\toprule
Method & $N=30$ & $N=50$ & $N=100$ \\
\midrule
Std & 0.277 $\pm$ 0.074 & 0.278 $\pm$ 0.043 & 0.238 $\pm$ 0.040 \\
Cov & \textbf{0.370 $\pm$ 0.069} & \textbf{0.340 $\pm$ 0.052} & \textbf{0.284 $\pm$ 0.041} \\
Full & 0.057 $\pm$ 0.016 & 0.022 $\pm$ 0.006 & 0.010 $\pm$ 0.000 \\
\bottomrule
\end{tabular}

\end{table}

The pairwise significance analysis sharpens this picture. Tables~\ref{tab:sig_core}--\ref{tab:sig_scaling} report plus--minus matrices for reconstruction error, where ``$+$'' indicates a significant difference after Holm correction and ``$-$'' indicates no significant difference. In the core regime, coverage is significantly better than the worst overlap-driven mask ($\mask{100}$); differences inside the overlap-free family are mostly not significant at $N=3$ and become significant from $N=10$ onwards. In the larger regimes the separation between the overlap-free family ($\mask{000}$, $\mask{001}$, $\mask{010}$, $\mask{011}$) and the overlap-active family ($\mask{100}$, $\mask{101}$, $\mask{110}$, $\mask{111}$) is fully resolved: every cross-family comparison is significant. This supports RQ3: the qualitative ranking is not only preserved at larger $N$, it becomes statistically unambiguous.

\begin{table*}[!h]
\centering
\caption{Pairwise significance matrices for reconstruction error in the core regime. ``$+$'' denotes a significant difference at $\alpha=0.05$ after Holm correction; ``$-$'' denotes no significant difference. Diagonal entries (``$=$'') are self-comparisons.}
\label{tab:sig_core}
\small
\begin{tabular*}{\linewidth}{@{\extracolsep{\fill}}ccc@{}}
$N=3$ & $N=5$ & $N=10$ \\[2pt]

\begin{tabular}{l@{\hspace{2pt}}*{8}{r@{\hspace{2pt}}}}
\toprule
 & \texttt{000} & \texttt{001} & \texttt{010} & \texttt{011} & \texttt{100} & \texttt{101} & \texttt{110} & \texttt{111} \\
\midrule
\texttt{000} & = & - & - & - & + & - & + & - \\
\texttt{001} & - & = & - & - & - & - & - & - \\
\texttt{010} & - & - & = & - & + & - & + & - \\
\texttt{011} & - & - & - & = & + & - & + & - \\
\texttt{100} & + & - & + & + & = & - & - & - \\
\texttt{101} & - & - & - & - & - & = & - & - \\
\texttt{110} & + & - & + & + & - & - & = & - \\
\texttt{111} & - & - & - & - & - & - & - & = \\
\bottomrule
\end{tabular}
 &
\begin{tabular}{l@{\hspace{2pt}}*{8}{r@{\hspace{2pt}}}}
\toprule
 & \texttt{000} & \texttt{001} & \texttt{010} & \texttt{011} & \texttt{100} & \texttt{101} & \texttt{110} & \texttt{111} \\
\midrule
\texttt{000} & = & - & - & - & + & + & + & - \\
\texttt{001} & - & = & - & - & + & + & + & - \\
\texttt{010} & - & - & = & - & + & + & + & + \\
\texttt{011} & - & - & - & = & + & + & + & - \\
\texttt{100} & + & + & + & + & = & - & - & - \\
\texttt{101} & + & + & + & + & - & = & - & - \\
\texttt{110} & + & + & + & + & - & - & = & - \\
\texttt{111} & - & - & + & - & - & - & - & = \\
\bottomrule
\end{tabular}
 &
\begin{tabular}{l@{\hspace{2pt}}*{8}{r@{\hspace{2pt}}}}
\toprule
 & \texttt{000} & \texttt{001} & \texttt{010} & \texttt{011} & \texttt{100} & \texttt{101} & \texttt{110} & \texttt{111} \\
\midrule
\texttt{000} & = & - & - & - & + & + & + & + \\
\texttt{001} & - & = & - & - & + & + & + & - \\
\texttt{010} & - & - & = & - & + & + & + & + \\
\texttt{011} & - & - & - & = & + & + & + & - \\
\texttt{100} & + & + & + & + & = & - & - & + \\
\texttt{101} & + & + & + & + & - & = & - & - \\
\texttt{110} & + & + & + & + & - & - & = & + \\
\texttt{111} & + & - & + & - & + & - & + & = \\
\bottomrule
\end{tabular}

\end{tabular*}
\end{table*}

\begin{table*}[!ht]
\centering
\caption{Pairwise significance matrices for reconstruction error in the scalability regime. Symbols as in Table~\ref{tab:sig_core}.}
\label{tab:sig_scaling}
\small
\begin{tabular*}{\linewidth}{@{\extracolsep{\fill}}ccc@{}}
$N=3$ & $N=5$ & $N=10$ \\[2pt]
\begin{tabular}{l@{\hspace{2pt}}*{8}{r@{\hspace{2pt}}}}
\toprule
 & \texttt{000} & \texttt{001} & \texttt{010} & \texttt{011} & \texttt{100} & \texttt{101} & \texttt{110} & \texttt{111} \\
\midrule
\texttt{000} & = & - & - & - & + & + & + & + \\
\texttt{001} & - & = & - & - & + & + & + & + \\
\texttt{010} & - & - & = & - & + & + & + & + \\
\texttt{011} & - & - & - & = & + & + & + & + \\
\texttt{100} & + & + & + & + & = & - & - & - \\
\texttt{101} & + & + & + & + & - & = & - & - \\
\texttt{110} & + & + & + & + & - & - & = & - \\
\texttt{111} & + & + & + & + & - & - & - & = \\
\bottomrule
\end{tabular}
 &
\begin{tabular}{l@{\hspace{2pt}}*{8}{r@{\hspace{2pt}}}}
\toprule
 & \texttt{000} & \texttt{001} & \texttt{010} & \texttt{011} & \texttt{100} & \texttt{101} & \texttt{110} & \texttt{111} \\
\midrule
\texttt{000} & = & - & - & - & + & + & + & + \\
\texttt{001} & - & = & - & - & + & + & + & + \\
\texttt{010} & - & - & = & - & + & + & + & + \\
\texttt{011} & - & - & - & = & + & + & + & + \\
\texttt{100} & + & + & + & + & = & - & - & - \\
\texttt{101} & + & + & + & + & - & = & - & - \\
\texttt{110} & + & + & + & + & - & - & = & - \\
\texttt{111} & + & + & + & + & - & - & - & = \\
\bottomrule
\end{tabular}
 &
\begin{tabular}{l@{\hspace{2pt}}*{8}{r@{\hspace{2pt}}}}
\toprule
 & \texttt{000} & \texttt{001} & \texttt{010} & \texttt{011} & \texttt{100} & \texttt{101} & \texttt{110} & \texttt{111} \\
\midrule
\texttt{000} & = & - & - & - & + & + & + & + \\
\texttt{001} & - & = & - & - & + & + & + & + \\
\texttt{010} & - & - & = & + & + & + & + & + \\
\texttt{011} & - & - & + & = & + & + & + & + \\
\texttt{100} & + & + & + & + & = & - & - & - \\
\texttt{101} & + & + & + & + & - & = & - & - \\
\texttt{110} & + & + & + & + & - & - & = & - \\
\texttt{111} & + & + & + & + & - & - & - & = \\
\bottomrule
\end{tabular}

\end{tabular*}
\end{table*}

Figure~\ref{fig:representative} complements the aggregate statistics with point-set geometry from a single representative configuration at $N=5$. The three panels show the median-error-supporting run of the standard baseline ($\mask{000}$), the best-performing coverage-only objective ($\mask{010}$), and the full objective ($\mask{111}$) on the same dataset. Originals are marked $o_1,\ldots,o_5$ and reconstructions $r_1,\ldots,r_5$. Coverage places four of the five prototypes inside the input range $[0,1]$, tracking the originals in both $x$ and $y$ within $E=0.292$, and expels only one prototype mildly to $x=1.2$. The standard baseline expels three of five prototypes, with two clustered at the right edge and one well below zero, yielding $E=0.713$. The full objective is dominated by the expulsion mechanism of Section~\ref{sec:mechanism}: only a single prototype remains near the input range, two are expelled to the left, and one is pushed all the way to $x\approx 12$, producing the catastrophic $E=2.923$. The figure therefore shows directly that the ranking of mask families on aggregate error reflects qualitatively different reconstructed geometries, and that overlap-active masks fail not by approximating the wrong $y$ but by relocating their prototypes outside the input region.

\begin{figure}[!ht]
\centering
\includegraphics[width=\linewidth]{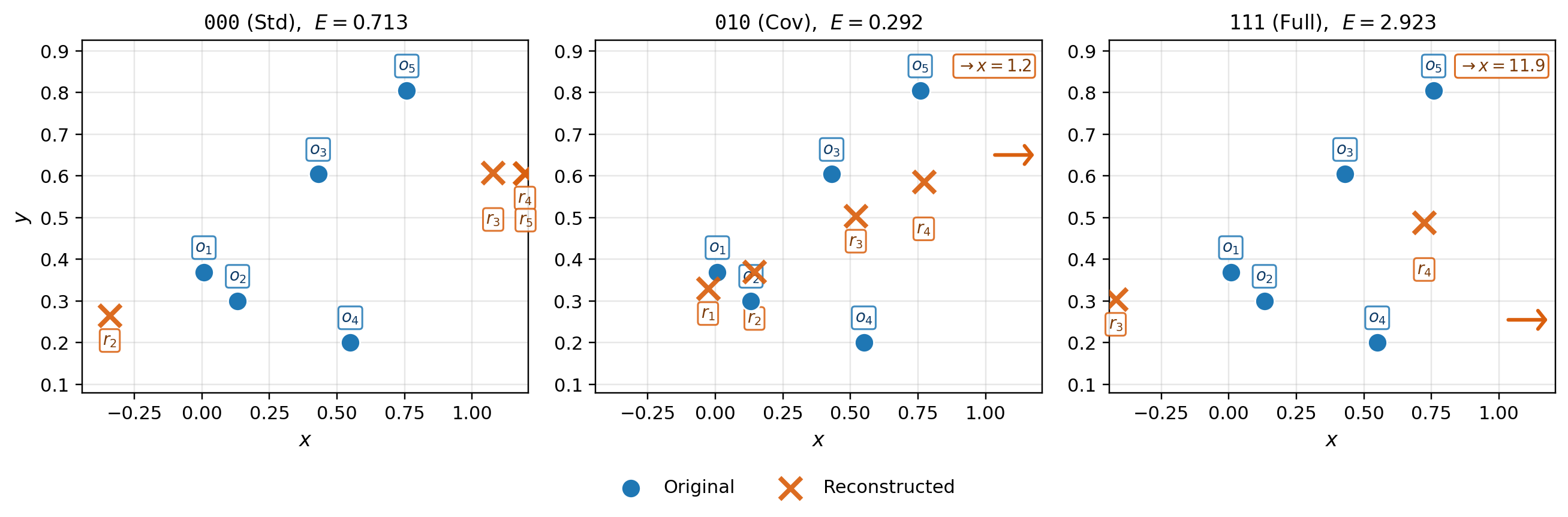}
\caption{Representative reconstructions at $N=5$ for masks $\mask{000}$ (Std), $\mask{010}$ (Cov), and $\mask{111}$ (Full). Originals $o_i$ in blue, reconstructions $r_i$ in orange. Arrows at the right edge mark prototypes expelled outside the plotted range, with their $x$ value. $E$ is the Hungarian-matching reconstruction error.}\label{fig:representative}
\end{figure}

Taken together, the experiments lead to a stable empirical ranking of the three structural terms. Coverage is consistently useful in mean reconstruction error, with effect sizes that are real but modest at the extremes of $N$ and substantial in the mid-range. Separation is mixed: it can increase specialization, but it does not beat coverage-only training in reconstruction accuracy. Overlap is harmful: once activated, it degrades both reconstruction and specialization while leaving the fit essentially intact, and its harm becomes more severe as $N$ grows. The study therefore supports a precise rather than generic notion of prototype-recoverability-aware training: useful biases must preserve enough flexibility for fitting, whereas naive exclusivity constraints overconstrain the hidden layer and misplace its prototypes.

\section{Limitations and Scope}
\label{sec:limitations}

The conclusions above hold under the controlled conditions of Section~\ref{sec:protocol} and should not be extrapolated beyond them. The experiments use one-dimensional synthetic uniform datasets with $N\in\{3,5,10,30,50,100\}$, a Gaussian-activation MLP with width tied to dataset size ($H=N$), a fixed regularisation coefficient $\lambda=10^{-2}$ shared across the three structural terms, $200$ Adam epochs, and $10$ runs per $(N,\text{mask})$ cell. The reconstruction metric is prototype-based and one-dimensional; we make no claim of arbitrary parameter inversion, of resistance against gradient-leakage or membership-inference attacks, or of scalability to high-dimensional inputs and deep architectures. The non-monotonic effect-size profile in Table~\ref{tab:effect_sizes} suggests that conclusions in this family are also conditional on the regularisation coefficient: a single $\lambda$ rules out one direction of mis-scaling but cannot fully separate beneficial structure from coefficient choice. Within these limits the empirical ranking is stable across all tested~$N$.

\section{Reproducibility and Artifact Structure}
\label{sec:repro}

The public release will contain the manuscript source, all figures and tables that appear in the paper, and a Python implementation of the protocol of Section~\ref{sec:protocol}. The implementation generates the synthetic datasets under~\eqref{eq:delta} with the constructive sampler, trains the eight masks under~\eqref{eq:total_loss}, computes the reconstruction error~\eqref{eq:reconstruction_error} by Hungarian assignment, computes the specialization ratio~\eqref{eq:spec}, and regenerates the summary tables, significance matrices, and figures used in Section~\ref{sec:results}. The full grid of $480$ runs takes on the order of one minute on a single CPU core. The master seed is fixed in the run script; per-run seeds are derived deterministically from $(N,\text{dataset id},\text{init id},\text{mask})$ so that any individual configuration can be re-played independently. Final tables and figures are also stored as static artifacts so the manuscript can be rebuilt without re-running the experiment.

\section{Conclusions}
\label{sec:conclusions}

We studied three explicit training biases for hidden-neuron specialization in minimal Gaussian MLPs and evaluated their implications for prototype-based recoverability. The experiments support three conclusions. First, specialization-oriented losses can improve prototype-based reconstruction, but the improvement is not uniform across losses or sizes: coverage helps, separation is mixed, and overlap hurts. Second, the harm caused by overlap penalties is not an optimization failure; it is the result of an expulsion mechanism in which the optimizer satisfies the loss by pushing prototype centers outside the convex hull of the training inputs. Coverage is the only attractor among the three structural losses we study, and is also the only one that consistently helps; separation and overlap are both repulsors and admit the same expulsive equilibrium, with overlap reaching it at the paper's nominal hyperparameter choice while separation reaches it only at much larger $\tau$. Third, the full combined objective does not outperform the best partial objective, and its degradation becomes clearer rather than weaker as $N$ grows, because adding overlap to any base mask is enough to route that mask to the expelled regime.

These findings matter beyond the specific synthetic setting. They show that prototype-recoverability-aware training is not simply a matter of adding more structural terms to the objective. The geometry induced by the losses must remain compatible with the optimization problem, and in particular a repulsive loss without a compensating attractor will admit a degenerate minimum at infinity. The phenomenon is closely related to the attraction--repulsion gradient imbalance that Xie et al.\ identify in long-tailed classification~\cite{xie2023neural}, and the connection suggests that the design principle ``every repulsor needs a compatible attractor'' carries across settings as different as classification with class means and regression with prototypes. The same lens may inform design-time certification: structural training biases that shape an auditable latent geometry could complement post hoc verification of properties such as general ethical compliance, by making relevant features of the trained model evident from its weights. Future work on prototype-recoverability-aware training at larger scales and in richer architectures, including iterative refinement procedures that operate on the prototype set produced by this baseline and meta-optimization of the per-term coefficients $\lambda_o,\lambda_c,\lambda_s$ against the specialization ratio $S$, can use this principle as a starting point.


\section*{Appendix A. Proofs}

\setcounter{theorem}{0}

\begin{theorem}[Coverage cannot reward expulsion]
\label{thm:coverage_hull}
Let $D=\{x_i\}_{i=1}^{N}\subset[0,1]$ be the training inputs and
let $\hat{x}_j=-b_j/w_j$ be the $j$-th prototype.
If $\hat{x}_j\notin[0,1]$ for some $j$, then
moving $\hat{x}_j$ further away from $[0,1]$
does not decrease $L_{\mathrm{coverage}}$.
\end{theorem}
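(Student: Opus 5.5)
We treat $L_{\mathrm{coverage}}$ as a function of the single prototype $\hat{x}_j$ with all other prototypes held fixed. Note that $x_i+b_j/w_j=x_i-\hat{x}_j$, so
\[
L_{\mathrm{coverage}}=\sum_{i=1}^{N}\min\Bigl\{(x_i-\hat{x}_j)^2,\;m_i\Bigr\},\qquad m_i=\min_{k\neq j}(x_i-\hat{x}_k)^2,
\]
where each $m_i$ does not depend on $\hat{x}_j$ (set $m_i=+\infty$ if $N=1$). By symmetry it suffices to treat $\hat{x}_j>1$; the case $\hat{x}_j<0$ follows by reflecting $x\mapsto 1-x$, which preserves $[0,1]$ and all squared distances. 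Make ``moving further away'' precise as replacing $\hat{x}_j=t$ by $t'>t>1$. The claim becomes that $g(t)=\sum_i \min\{(x_i-t)^2,m_i\}$ is nondecreasing on $(1,\infty)$.

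The key step is termwise monotonicity. For every $i$ we have $x_i\le 1<t$, so $(x_i-t)^2=(t-x_i)^2$ is strictly increasing in $t$ on $(1,\infty)$. The pointwise minimum of a nondecreasing function and a constant is nondecreasing. Hence each summand $\min\{(t-x_i)^2,m_i\}$ is nondecreasing, and so is the sum. This yields $g(t')\ge g(t)$.

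This also recovers the dichotomy in the main text. If $\hat{x}_j$ is the nearest prototype to some $x_i$ (i.e.\ $(t-x_i)^2<m_i$), that term strictly increases until it saturates at $m_i$. Otherwise the term is constant, so the loss is unchanged. A one-line remark handles ties in the $\arg\min$: the min formulation sidesteps them.

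The main obstacle is only a modelling subtlety rather than mathematics. $\hat{x}_j$ is a derived quantity of $(b_j,w_j)$, so we must state that ``moving $\hat{x}_j$'' means any change of $(b_j,w_j)$ (with $|w_j|\ge 10^{-1}$) realizing the new center, leaving the other parameters fixed. Since $L_{\mathrm{coverage}}$ depends on $(b_j,w_j)$ only through $\hat{x}_j$, this reduction is exact. We will state it at the outset so the one-variable argument is rigorous.
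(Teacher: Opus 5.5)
Your proof is correct and follows essentially the same route as the paper's. Both arguments hold the other prototypes fixed, reduce to $\hat{x}_j>1$ by symmetry, note that $|x_i-\hat{x}_j|$ grows for every $x_i\in[0,1]$ as $\hat{x}_j$ moves right, and conclude that each $\min_k$ term is nondecreasing; your explicit split $\min\{(x_i-\hat{x}_j)^2,m_i\}$ simply makes precise the monotonicity-of-min step the paper leaves implicit, and your same-side reading $t'>t>1$ matches the paper's $\hat{x}_j'=\hat{x}_j+\varepsilon$.
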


\begin{proof}
Write $L_{\mathrm{coverage}}=\sum_{i=1}^{N}\min_{k}(x_i-\hat{x}_k)^2$.
Fix $j$ and suppose $\hat{x}_j\notin[0,1]$; without loss of generality
assume $\hat{x}_j>1$ (the case $\hat{x}_j<0$ is symmetric).
For every $x_i\in[0,1]$ we have $|x_i-\hat{x}_j|\ge \hat{x}_j-1>0$.
Let $\hat{x}_j'=\hat{x}_j+\varepsilon$ with $\varepsilon>0$, while all other prototypes are held fixed.
For any training point $x_i$,
\[
  |x_i-\hat{x}_j'|=|x_i-\hat{x}_j|+\varepsilon\cdot\mathrm{sgn}(\hat{x}_j'-x_i)
  \;=\;|x_i-\hat{x}_j|+\varepsilon\;\ge\;|x_i-\hat{x}_j|,
\]
since $x_i<1<\hat{x}_j<\hat{x}_j'$ implies $\hat{x}_j'-x_i>0$.
Hence the term $\min_k(x_i-\hat{x}_k)^2$ can only stay equal or
increase when $\hat{x}_j$ is replaced by $\hat{x}_j'$, for every $i$.
Summing over $i$ gives $L_{\mathrm{coverage}}(\hat{x}_j')\ge L_{\mathrm{coverage}}(\hat{x}_j)$.
\end{proof}

\begin{theorem}[Overlap loss vanishes with separation]
\label{thm:overlap_vanishes}
Let $\Delta=\min_{j\neq k}|\hat{x}_j-\hat{x}_k|$ be the minimum
pairwise distance between prototypes, and assume $|w_j|\ge w_{\min}>0$
for all $j$.  Then
\[
  L_{\mathrm{overlap}}\;\le\;
  \binom{N}{2}N\,e^{-w_{\min}^2\Delta^2/2}\;\xrightarrow{\Delta\to\infty}\;0.
\]
\end{theorem}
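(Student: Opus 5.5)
The plan is to bound each summand of $L_{\mathrm{overlap}}$ separately and then count terms. First rewrite each activation in prototype form. Since $\hat{x}_j=-b_j/w_j$, we have $w_jx_i+b_j=w_j(x_i-\hat{x}_j)$. Hence
\[
\sigma(w_jx_i+b_j)=\exp\bigl(-w_j^2(x_i-\hat{x}_j)^2\bigr)\le \exp\bigl(-w_{\min}^2(x_i-\hat{x}_j)^2\bigr),
\]
using $|w_j|\ge w_{\min}$. So a single product term of \eqref{eq:lov} is bounded by $\exp\bigl(-w_{\min}^2[(x_i-\hat{x}_j)^2+(x_i-\hat{x}_k)^2]\bigr)$.

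The key step is a lower bound on $(x_i-\hat{x}_j)^2+(x_i-\hat{x}_k)^2$ that holds for every $x_i$. Put $u=x_i-\hat{x}_j$ and $v=x_i-\hat{x}_k$, so that $u-v=\hat{x}_k-\hat{x}_j$. The parallelogram inequality $u^2+v^2\ge \tfrac12(u-v)^2$ then gives
\[
(x_i-\hat{x}_j)^2+(x_i-\hat{x}_k)^2\ge \tfrac12(\hat{x}_j-\hat{x}_k)^2\ge \tfrac12\Delta^2 .
\]
Equivalently, the worst case is $x_i$ at the midpoint of the two prototypes, where each distance is $|\hat{x}_j-\hat{x}_k|/2$. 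This yields
\[
\sigma(w_jx_i+b_j)\,\sigma(w_kx_i+b_k)\le e^{-w_{\min}^2\Delta^2/2}
\]
uniformly in $i$ and in the pair $(j,k)$. The bound does not depend on where the data lie, which is why no convex-hull assumption is needed.

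To finish, sum over the $N$ inputs and the $\binom{N}{2}$ pairs $j<k$ to obtain $L_{\mathrm{overlap}}\le \binom{N}{2}N\,e^{-w_{\min}^2\Delta^2/2}$. For fixed $N$ and $w_{\min}>0$ the right-hand side tends to $0$ as $\Delta\to\infty$.

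The only genuine obstacle is the middle step, choosing the right quadratic inequality. A naive approach that bounds one factor by $1$ and the other by a distance to the data does not work without control of $x_i$, whereas the symmetric $\tfrac12(u-v)^2$ bound avoids this. The remaining care is to note that $\sigma$ is decreasing in $|z|$, so replacing $w_j^2$ by $w_{\min}^2$ is legitimate, and that the bound holds for every configuration. This matches the projection $|w_j|\ge 10^{-1}$ enforced in Section~\ref{sec:protocol}.
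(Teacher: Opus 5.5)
Your proof is correct and follows the paper's route: you rewrite each activation as $\exp\bigl(-w_j^2(x_i-\hat x_j)^2\bigr)$, replace $w_j^2$ by $w_{\min}^2$, lower-bound $(x_i-\hat x_j)^2+(x_i-\hat x_k)^2$ by $\tfrac12(\hat x_j-\hat x_k)^2\ge\tfrac12\Delta^2$, and count the $N\binom{N}{2}$ terms. The only cosmetic difference is that you use the inequality $u^2+v^2\ge\tfrac12(u-v)^2$, while the paper writes the midpoint identity $(x_i-\hat x_j)^2+(x_i-\hat x_k)^2=2\bigl(x_i-\tfrac{\hat x_j+\hat x_k}{2}\bigr)^2+\tfrac12(\hat x_j-\hat x_k)^2$; these express the same fact.
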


\begin{proof}
Using $\sigma(w_jx_i+b_j)=\exp\!\bigl(-w_j^2(x_i-\hat x_j)^2\bigr)$,
each term in~\eqref{eq:lov} factors as
\[
\sigma(w_jx_i+b_j)\,\sigma(w_kx_i+b_k)
=\exp\!\Bigl(-w_j^2(x_i-\hat x_j)^2-w_k^2(x_i-\hat x_k)^2\Bigr).
\]
Since $|w_j|,|w_k|\ge w_{\min}$,
\[
\sigma(w_jx_i+b_j)\,\sigma(w_kx_i+b_k)
\;\le\;
\exp\!\Bigl(-w_{\min}^2\bigl[(x_i-\hat x_j)^2+(x_i-\hat x_k)^2\bigr]\Bigr).
\]
The algebraic identity
\[
(x_i-\hat x_j)^2+(x_i-\hat x_k)^2
=2\Bigl(x_i-\tfrac{\hat x_j+\hat x_k}{2}\Bigr)^{\!2}
+\tfrac{1}{2}(\hat x_j-\hat x_k)^2
\;\ge\;\tfrac{1}{2}(\hat x_j-\hat x_k)^2
\;\ge\;\tfrac{\Delta^2}{2}
\]
then yields
\[
\sigma(w_jx_i+b_j)\,\sigma(w_kx_i+b_k)
\;\le\;\exp\!\bigl(-w_{\min}^2\Delta^2/2\bigr).
\]
Summing over the $\binom{N}{2}$ pairs and the $N$ inputs gives
$L_{\mathrm{overlap}}\le\binom{N}{2}N\exp(-w_{\min}^2\Delta^2/2)\to 0$
as $\Delta\to\infty$.
\end{proof}

\begin{theorem}[Separation loss vanishes with separation]
\label{thm:separation_vanishes}
Let $\Delta=\min_{j\neq k}|\hat{x}_j-\hat{x}_k|$ and let $\tau>0$.
Then
\[
  L_{\mathrm{separation}}\;\le\;
  \binom{N}{2}\exp\!\Bigl(-\tfrac{\Delta^2}{\tau}\Bigr)
  \;\xrightarrow{\Delta\to\infty}\;0.
\]
\end{theorem}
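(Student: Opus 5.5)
The plan is a direct termwise bound, parallel to the proof of Theorem~\ref{thm:overlap_vanishes} but simpler, because no activation weights or inputs enter. First I rewrite each summand of~\eqref{eq:lsep} in terms of the prototypes. Since $\hat{x}_j=-b_j/w_j$, we have $\frac{b_j}{w_j}-\frac{b_k}{w_k}=\hat{x}_k-\hat{x}_j$, so
\[
L_{\mathrm{separation}}=\sum_{1\le j<k\le N}\exp\!\Bigl(-\tfrac{(\hat{x}_j-\hat{x}_k)^2}{\tau}\Bigr).
\]
The projection $|w_j|\ge 10^{-1}$ from Section~\ref{sec:protocol} guarantees that this is well defined, although the theorem itself needs no bound on $w_j$.

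Next I bound each term. By the definition of $\Delta$, every pair $j\neq k$ satisfies $|\hat{x}_j-\hat{x}_k|\ge\Delta\ge 0$, so $(\hat{x}_j-\hat{x}_k)^2\ge\Delta^2$. Because $\tau>0$ and $t\mapsto e^{-t/\tau}$ is nonincreasing on $[0,\infty)$, each summand is at most $\exp(-\Delta^2/\tau)$. There are $\binom{N}{2}$ unordered pairs, so summing gives the stated bound.

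Finally, with $N$ and $\tau$ fixed, $\binom{N}{2}$ is a constant. Since $\exp(-\Delta^2/\tau)\to 0$ as $\Delta\to\infty$, the bound tends to $0$, and the loss does too because it is nonnegative.

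No step is a real obstacle. The only points needing care are the sign bookkeeping when passing from $b_j/w_j$ to $\hat{x}_j$, which is harmless because the difference is squared, and the degenerate case $N=1$. There the sum is empty, $\Delta$ is a minimum over an empty set, and both sides are $0$, so the statement holds vacuously.
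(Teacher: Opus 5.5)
Your proof is correct and follows the paper's argument: bound each pairwise term by $\exp(-\Delta^2/\tau)$ using $|\hat{x}_j-\hat{x}_k|\ge\Delta$ and monotonicity of the exponential, then sum over the $\binom{N}{2}$ pairs and let $\Delta\to\infty$. Your extra remarks on the sign bookkeeping from $b_j/w_j$ to $\hat{x}_j$, the nonnegativity of the loss, and the vacuous case $N=1$ are sound refinements that the paper leaves implicit.
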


\begin{proof}
Each term in~\eqref{eq:lsep} satisfies
\[
\exp\!\left(-\frac{(\hat x_j-\hat x_k)^2}{\tau}\right)
\;\le\;
\exp\!\left(-\frac{\Delta^2}{\tau}\right),
\]
because $|\hat x_j-\hat x_k|\ge\Delta$ for all $j\neq k$.
Summing over the $\binom{N}{2}$ pairs and taking $\Delta\to\infty$
completes the proof.
\end{proof}

\section*{Declaration of generative AI and AI-assisted technologies }

During the preparation of this work the authors used Claude Opus 4.7 in order to summarise results in graphics and cleaning the English expressions in the paper. After using this tool, the author reviewed and edited the whole content as needed and take full responsibility for the content of the published article.

\bibliographystyle{elsarticle-num}
\bibliography{references}

\end{document}